\documentclass[letterpaper, 10 pt, conference]{ieeeconf}  %

\usepackage{times}

\usepackage{multicol}
\usepackage[bookmarks=true]{hyperref}

\usepackage{times}
\usepackage{algorithm}
\usepackage{algpseudocode}
\usepackage{amsmath}
\usepackage{amssymb}
\usepackage{graphicx}
\usepackage{array}
\usepackage{multirow}
\usepackage[table]{xcolor}
\usepackage{cite}
\usepackage{subcaption}
\usepackage{mysymbol}

\newtheorem{theorem}{Theorem}[section]

\newtheorem{prop}[theorem]{Proposition}

\newtheorem{problem}{Problem}

\newtheorem{rem}[theorem]{Remark}

\algtext*{EndIf}
\algtext*{EndFor}
\algtext*{EndWhile}
\algtext*{EndFunction}
\algtext*{EndProcedure}

\IEEEoverridecommandlockouts                              %

\title{\LARGE \bf
\textcolor{black}{Conformal Constraint Tightening for Chance-Constrained Motion Planning with Unknown Dynamics}
}

\author{Shubham Natraj$^{1}$, Bruno Sinopoli$^{2}$ and Yiannis Kantaros$^{1}$%
\thanks{*This work was funded by NSF awards CNS-2231257 and CCF-2403758.}%
\thanks{$^{1}$Shubham Natraj and Yiannis Kantaros are with the Department of Electrical and Systems Engineering, Washington University in St. Louis, 1 Brookings Dr, St. Louis, MO 63130, USA, {\tt\small \{n.shubham, ioannisk\}@wustl.edu}}%
\thanks{$^{2}$Bruno Sinopoli is with the School of Electrical, Computer and Energy Engineering, Arizona State University, 650 E. Tyler Mall, Tempe, AZ 85281, USA, {\tt\small bruno.sinopoli@asu.edu}}%
}

\begin{document}

\maketitle
\thispagestyle{empty}
\pagestyle{empty}

\begin{abstract}
Motion planning algorithms compute control sequences that drive autonomous robots to goal regions while avoiding unsafe states. Existing methods, from sampling-based planning to deep reinforcement learning, typically provide task-completion guarantees only with respect to a nominal model or simulator, which may be invalidated when the true dynamics are unknown or difficult to model accurately. This letter addresses this limitation for systems with unknown dynamics and an available approximate nominal model, contributing a planner-agnostic constraint-tightening procedure that equips existing planners with a probabilistic task-completion guarantee on the true system. 
We leverage conformal prediction to provide a probabilistic bound on the nominal-to-true trajectory deviation over a distribution of planning problems.
We tighten the planning constraints using that bound, 
and show that solving the tightened problem under the nominal model is a sufficient condition for solving the original problem on the true system with a prescribed probability.
We validate the theoretical guarantees empirically and demonstrate substantially improved task completion relative to nominal-model planning.

\end{abstract}

\section{Introduction}
\label{sec:intro}

Motion planning requires computing a sequence of control inputs that drives a robot from an initial state to a desired goal region while avoiding unsafe states. Numerous approaches have been proposed, including sampling-based planners \cite{lavalle2001randomized,li2016asymptotically,luo2021abstraction,perrault2025kino,duong2026ultrafast,wang2022rapidly}, model predictive control (MPC) methods \cite{fang2022model,luis2020online,chen2022interactive}, and deep-learning methods \cite{tsounis2020deepgait,malczyk2025semantically,safaoui2024safe,lu2023learning,rousseas2020optimal}. However, they typically assume access to an accurate model or simulator of the system. Consequently, their safety and task-completion guarantees may be invalidated by model mismatch. Here, we consider motion-planning problems in which the robot's true dynamics are unknown, while an \textit{approximate nominal model} is available. 
Our goal is to compute a control sequence whose execution on the true system reaches the goal region while avoiding unsafe states, with a prescribed probability over planning problems drawn from an unknown distribution. 
This problem is challenging since unmodeled effects, parameter uncertainty, and approximation errors in learned system dynamics can cause nominally safe trajectories to fail on the true system.

In this paper, we propose a planner-agnostic framework for motion planning under model mismatch.
First, we leverage conformal prediction (CP) to quantify the discrepancy between nominal %
and
true trajectories. Using a calibration dataset of nominal trajectories and their corresponding true trajectories obtained by executing the associated control sequences, CP provides a probabilistic bound on the worst-case nominal-to-true trajectory deviation over the planning horizon. %
This bound defines conformal balls around the states of %
any planned nominal trajectory that contain the corresponding true trajectory with a user-specified marginal probability. We then use the radius of these balls to \textcolor{black}{shrink} the free state space and goal region, yielding a \textcolor{black}{planning problem with tightened constraints}. We show that a control sequence computed for this tightened problem using the nominal model satisfies the original planning problem on the unknown true system with a prescribed probability. Thus, chance-constrained planning under unknown dynamics is reduced to deterministic planning under the nominal model which can be solved by any planning and control method.
We empirically evaluate our framework using a sampling-based planner,
demonstrating improved task-completion rates relative to uncertainty-agnostic baselines. We also empirically characterize the trade-off between conservativeness and task completion as the prescribed probability increases.

\textbf{Related Works:}
Several motion-planning and control methods address uncertainty in system models. We organize the most relevant work into the following categories.

\textit{Kinodynamic Planning:} Sampling-based planners for uncertain systems have been proposed in \cite{luders2010chance,summers2018distributionally,gracia2026provably,wu2022robust}. These methods generally assume a known stochastic system model together with structural assumptions on the uncertainty. For example, CC-RRT \cite{luders2010chance} plans for a \emph{known} discrete-time linear system subject to \emph{Gaussian} process noise and uncertain obstacles by enforcing a prescribed probability of feasibility at each time step. DR-RRT and WDR-X \cite{summers2018distributionally,gracia2026provably} relax distributional assumptions through ambiguity sets, but still propagate uncertainty using a \emph{known} stochastic dynamics model or model class. Robust-RRT \cite{wu2022robust} considers nonlinear systems subject to parametric uncertainty with known bounds, which enable reachability analysis during planning. If the modeling assumptions in these works are violated, the associated guarantees need not hold. In contrast, we assume access only to a nominal model and consider unknown true dynamics without requiring a parametric uncertainty model or a known analytical relationship between the nominal and true dynamics. We assume only that the nominal and true systems share the same state and control spaces.

\textit{Tracking Control:} Related work also considers tracking control under stochastic dynamics and model uncertainty \cite{micheli2022scenario,micheli2022data,michaux2025can,das2025spatiotemporal,mayne2011tube,pan2023distributionally,verginis2022kdf}. As in the kinodynamic-planning literature, many of these approaches assume known system models and impose structural assumptions on the disturbance process or model uncertainty. For example, \cite{pan2023distributionally} considers stochastic known linear systems whose noise mean belongs to an ambiguity set without considering safety constraints. %
The method in \cite{verginis2022kdf} %
combines geometric planning in a restricted free space obtained by inflating obstacles with user-chosen funnel bounds with a funnel-based feedback controller that guarantees tracking for a structured class of fully actuated systems. In contrast, our framework uses calibration data to determine the tightening of obstacle and goal sets required to provide a user-specified probabilistic task-completion guarantee. More broadly, these methods address tracking-control problems that typically assume that a reference trajectory is given and seek to bound the deviation of the executed trajectory from that reference. Thus, the guarantees provided by these approaches are fundamentally different from the task-completion guarantee considered in this work.

\textcolor{black}{\textit{Conformal Prediction:} CP has recently been applied to autonomy problems involving planning and control \cite{vlahakis2024conformal,chee2023uncertainty, chee2024uncertainty, srinivasan2026safety}. These works address problem settings and provide guarantees that differ from those considered here. For example, \cite{vlahakis2024conformal} uses CP for stochastic optimal control and trajectory tracking in known linear systems subject to disturbances drawn from an unknown distribution. Similarly, \cite{chee2023uncertainty, chee2024uncertainty} use weighted CP to quantify uncertainty in learned dynamics models within learning-based MPC for trajectory-tracking tasks. In contrast, we formulate CP directly at the level of reach-avoid motion-planning problems for systems with unknown true dynamics and an available nominal model.}

The work closest to ours is \cite{srinivasan2026safety}, which also uses CP to provide probabilistic guarantees for planning and control under model uncertainty. Our framework, however, applies CP over a distribution of planning problems that may vary in obstacle configurations, initial states, goal regions, and horizons, yielding a task-completion guarantee for an unseen problem drawn from this distribution. This is particularly relevant  in robotic applications when a planner is repeatedly invoked across new environments and missions. In contrast, \cite{srinivasan2026safety} applies CP to local model-prediction errors at state-control pairs for a given planning task and propagates the resulting uncertainty through an SLS-based MPC controller. Consequently, its guarantee concerns the closed-loop execution of that task and is not formulated over a distribution of planning problems whose environments and mission specifications may vary. Our framework is also planner-agnostic: the resulting tightened problem can be solved by any planner capable of enforcing the tightened constraints, whereas the guarantee in \cite{srinivasan2026safety} is tied to its SLS-based MPC construction.

\textit{Learning-based Approaches:} Finally, learning-based techniques have also addressed planning and control problems under unknown dynamics or model mismatch. Examples include joint model and policy learning \cite{lu2023learning, zhang2025motion}, active model learning for uncertainty-aware MPC \cite{saviolo2023active}, residual reinforcement learning \cite{johannink2018residual}, and sim-to-real transfer through dynamics randomization \cite{peng2018sim}. These methods address the broad challenge of acting under imperfect or unknown models, but they typically target empirical robustness and improved transfer performance without providing task-completion guarantees for the underlying true system.

\textbf{Contributions:} The main contributions of this paper are:
(1) We develop a CP-based uncertainty-quantification framework for systems with unknown true dynamics that uses only an approximate nominal model and calibration data collected from the true system to provide a probabilistic bound on trajectory-level nominal-to-true deviation over a distribution of planning problems.
(2) We use this bound to define a planning problem with tightened constraints and prove that every control sequence solving it for the nominal model also solves the original problem on the true system with a prescribed probability.
(3) We empirically validate these guarantees and demonstrate substantially improved task-completion rates compared with nominal-model planning.

\section{Problem Formulation}\label{sec:problem}

Consider a robot with discrete-time dynamics
\begin{equation}\label{eq:true-dyn}
    \bbx_{t+1} = f(\bbx_t,\bbu_t),
\end{equation}
where $\bbx_t \in \ccalX \subset \mathbb{R}^n$ denotes the robot state at time $t$, $\ccalX$ is the state space, and $\bbu_t \in \ccalU$ is the control input. The true system dynamics $f$ are assumed to be unknown. Instead, we have access to a deterministic nominal model
\begin{equation}\label{eq:proxy-dyn}
\hat{\bbx}_{t+1} = \hat{f}(\hat{\bbx}_t,\bbu_t),
\end{equation}
where $\hat{\bbx}_t \in \ccalX$ denotes the nominal state. The nominal model is subject to the same state and input constraints as  \eqref{eq:true-dyn}. While we focus here on deterministic true dynamics \eqref{eq:true-dyn}, extensions to stochastic dynamics %
are discussed in Remark~\ref{rem:stochastic-dynamics}. 

The objective is to compute a sequence of control inputs $\bbu_t$ driving
the robot from an initial state $\bbx_{\mathrm{init}}$ to a goal region $\ccalX_{\mathrm{goal}}\subseteq\ccalX$ within a finite horizon $H$, while avoiding unsafe states collected in $\ccalX_{\mathrm{obs}}\subset\ccalX$. We denote the corresponding safe state space by $\ccalX_{\mathrm{free}}:=\ccalX\setminus\ccalX_{\mathrm{obs}}$. Such planning problems are represented by tuples $\ccalM = (\ccalX,\ccalX_{\mathrm{free}},\bbx_{\mathrm{init}},\ccalX_{\mathrm{goal}},H)$. We assume that planning problems are drawn independently from an unknown distribution $\ccalD_{\ccalM}$, from which i.i.d. samples can be obtained.

The problem addressed in this paper is stated as follows:
\begin{problem}[Chance-Constrained Safe Planning]\label{pr:fixed-alpha}
Given a failure level $\alpha \in (0,1)$, a nominal model $\hat{f}$ in \eqref{eq:proxy-dyn}, 
and a planning problem $\ccalM \sim\ccalD_{\ccalM}$,
compute a control sequence $\bbu_{0:T-1} := (\bbu_0,\bbu_1,\dots,\bbu_{T-1})$ with $T \leq H$ such that the trajectory followed by the true system in \eqref{eq:true-dyn} satisfies the assigned task with probability at least $1-\alpha$, i.e.,
\begin{align}\label{eq:fixed-alpha-goal} %
    \mathbb{P}\Bigl(
    \wedge_{t=0}^{T}(\bbx_t \in \ccalX_{\mathrm{free}}) \ 
    \wedge
    (\bbx_T \in \ccalX_{\mathrm{goal}})
    \Bigr)
    \geq 1-\alpha.
\end{align}
\end{problem}

We emphasize that the probabilistic guarantee in \eqref{eq:fixed-alpha-goal} is not a per-task guarantee. Instead, the probability is taken marginally with respect to a planning problem $\ccalM\sim\ccalD_{\ccalM}$.

\section{Probabilistically Safe Planning}\label{sec:cp-cert}

This section presents our solution to Problem~\ref{pr:fixed-alpha}. First, we use Conformal Prediction (CP) to quantify the discrepancy between nominal trajectories generated by \eqref{eq:proxy-dyn} and true trajectories generated by \eqref{eq:true-dyn}; see Sections~\ref{sec:cp-background}-\ref{sec:cp-calibration}. Next, given a new planning problem $\ccalM\sim\ccalD_{\ccalM}$, we use the resulting threshold to construct a tightened planning problem $\ccalM_\alpha$ and show that any control sequence $\bbu_{0:T-1}$ that solves $\ccalM_\alpha$ for the nominal model also solves Problem~\ref{pr:fixed-alpha}; see Section~\ref{sec:certified-planning}.

\subsection{Background on Conformal Prediction}\label{sec:cp-background}

CP provides distribution-free, finite-sample valid uncertainty quantification for the predictions 
of any fixed model \cite{angelopoulos2023conformal,shafer2008tutorial}. 
Formally, consider a predictive model $\sigma : \mathbb{X} \to \mathbb{Y}$ that maps an input $X \in \mathbb{X}$ to a prediction $\sigma(X)\in \mathbb{Y}$ of a ground truth $Y \in \mathbb{Y}$. CP does not make any assumptions on the underlying model or data distribution. Instead, it relies on a user-defined \textit{nonconformity score} (NCS) function $s: \mathbb{X} \times \mathbb{Y} \to \mathbb{R}$ that quantifies the error or mismatch between the prediction $\sigma(X)$ and the true outcome $Y$. 

Let $\ccalS_{\text{cal}} = \{(X_i, Y_i)\}_{i=1}^{N_{\mathrm{cal}}}$ be a calibration dataset consisting of $N_{\mathrm{cal}}$ i.i.d. samples drawn from a distribution $\mathcal{D}$. We compute the calibration scores $R_i = s(X_i, Y_i)$ for all $i \in \{1, \dots, N_{\mathrm{cal}}\}$. Given an unseen test point $X \sim \mathcal{D}$ with an unknown ground truth $Y$, its corresponding unknown score is $R_{\mathrm{test}} = s(X,Y)$. Given a target failure probability $\alpha\in(0,1)$, CP computes a threshold $\hat q_\alpha$ for which the following marginal coverage guarantee holds:
\begin{equation}\label{eq:cp-validity-general}
    \mathbb{P}\bigl(R_{\mathrm{test}} \le \hat{q}_{\alpha}\bigr) \ge 1-\alpha,
\end{equation}
where $\hat{q}_{\alpha}$ is the $(1-\alpha)$-th empirical quantile of the set $\{R_1,\dots,R_{N_{\mathrm{cal}}},\infty\}$. Note that the probability in \eqref{eq:cp-validity-general} is marginal, holding over the combined randomness of the calibration data and the new test sample drawn from $\mathcal D$; \textcolor{black}{all probabilities considered hereon are of this form}.

\subsection{Quantifying Trajectory-level Model Mismatch with CP}\label{sec:cp-calibration}

Our approach leverages CP to quantify the uncertainty of the nominal model $\hat f$ in \eqref{eq:proxy-dyn}, i.e., the discrepancy between nominal trajectories generated by \eqref{eq:proxy-dyn} and the corresponding true trajectories generated by \eqref{eq:true-dyn}. Applying CP in this setting requires two components: (i) a calibration dataset comprising planning problems and true trajectories collected from the unknown system, and (ii) an NCS that measures the prediction error of the nominal model on each planning problem. %
Using these components, CP computes a threshold $\hat q_\alpha$ that bounds the nominal-to-true trajectory deviation with user-specified probability.

\textbf{Calibration Dataset:}
Consider a planning problem $\ccalM\sim\ccalD_{\ccalM}$. Let $\ccalU(\ccalM)$ denote the set of all finite control sequences $\bbu_{0:T-1}:=(\bbu_0,\ldots,\bbu_{T-1})$, with $T\leq H$, whose nominal trajectories generated by \eqref{eq:proxy-dyn} solve $\ccalM$. That is, each $\bbu_{0:T-1}\in\ccalU(\ccalM)$ generates nominal states $\hat{\bbx}_{0:T}:=(\hat{\bbx}_0,\ldots,\hat{\bbx}_T)$, \textcolor{black}{where $\hat{\bbx}_0=\bbx_{\mathrm{init}}$} , satisfying $\hat{\bbx}_t \in \ccalX_{\mathrm{free}}$ for all $t \in \{0,\dots,T\}$ and $\hat{\bbx}_T \in \ccalX_{\mathrm{goal}}$. During calibration, we assume physical access to the true system \eqref{eq:true-dyn}, allowing  nominally feasible control sequences to be executed and their resulting true trajectories $\bbx_{0:T}:=(\bbx_0,\ldots,\bbx_T)$ to be collected, \textcolor{black}{with $\bbx_0=\bbx_{\mathrm{init}}$}. The calibration dataset is 
$\ccalS_{\mathrm{cal}} =\{(\ccalM_i,\ccalY_i)\}_{i=1}^{N_{\mathrm{cal}}},$
where $\ccalM_i$ are drawn i.i.d. from $\ccalD_{\ccalM}$ and $\ccalY_i$ denotes the collection of true trajectories associated with selected nominally feasible control sequences from $\ccalU(\ccalM_i)$. In practice, since executing every control sequence in $\ccalU(\ccalM_i)$ is generally intractable, $\ccalY_i$ is constructed using a finite subset of nominally feasible control sequences; see Rem.~\ref{rem:practical-score}.

\textbf{Nonconformity Score (NCS):}
For a planning problem $\ccalM$, we assign to the nominal model $\hat f$ the following score, defined as the worst-case \textcolor{black}{nominal-to-true trajectory deviation} over all nominally feasible control sequences:
\begin{equation}\label{eq:ideal-score}
    R(\ccalM)
    :=
    \sup_{\bbu_{0:T-1}\in\ccalU(\ccalM)} \max_{t \in \{0,\dots,T\}} \|\bbx_t-\hat{\bbx}_t\|_2.
\end{equation}

\textbf{Calibration Step:}
For each calibration planning problem $\ccalM_i$, we compute the corresponding score $R_i:=R(\ccalM_i)$. Given the calibration scores $\{R_i\}_{i=1}^{N_{\mathrm{cal}}}$ and a target failure probability $\alpha\in(0,1)$, we compute the conformal threshold $\hat q_\alpha$ as the $(1-\alpha)$-th empirical quantile of $\{R_1,\dots,R_{N_{\mathrm{cal}}},\infty\}$.

\textbf{Nominal-to-True Model Error Bound:}
Given the test-time planning problem $\ccalM\sim\ccalD_{\ccalM}$, whose true trajectories generated by \eqref{eq:true-dyn} are unavailable, the corresponding score $R(\ccalM)$ in \eqref{eq:ideal-score} is unknown. Applying CP as in \eqref{eq:cp-validity-general} yields
\begin{equation}\label{eq:tube-guarantee}
    \mathbb{P}\!\left(
    \sup_{\bbu_{0:T-1}\in\ccalU(\ccalM)}
    \max_{t \in \{0,\dots,T\}}
    \|\bbx_t-\hat{\bbx}_t\|_2
    \le \hat q_\alpha
    \right)
    \ge 1-\alpha.
\end{equation}
Geometrically, \eqref{eq:tube-guarantee} means that, with probability at least $1-\alpha$, for every nominally feasible control sequence $\bbu_{0:T-1}$, the corresponding true states lie inside the sequence of radius-$\hat q_\alpha$ balls centered at the nominal states. This motivates the following sufficient condition for satisfying the chance constraint in Problem~\ref{pr:fixed-alpha}.

\begin{prop}[Planning Guarantee]\label{prop:cp-certification}
Let $\ccalM \sim \ccalD_{\ccalM}$ be the planning problem in \eqref{eq:tube-guarantee}, and let $\bbu_{0:T-1}\in\ccalU(\ccalM)$ be \textit{any} nominally feasible control sequence yielding a nominal trajectory $\hat{\bbx}_{0:T}$. Let $\mathcal B(\bbz,r)$ denote the closed Euclidean ball of radius $r$ centered at $\bbz$. If the sequence of radius-$\hat q_\alpha$ balls centered at the nominal states is contained in the free space and its terminal ball is contained in the goal region, i.e.,
\begin{equation}\label{eq:alpha-certified}
    \mathcal B(\hat{\bbx}_t,\hat q_\alpha) \subseteq \ccalX_{\mathrm{free}},
    \ \forall t \in \{0,\dots,T\},
    ~\mathcal B(\hat{\bbx}_T,\hat q_\alpha) \subseteq \ccalX_{\mathrm{goal}},
\end{equation}
then 
\textcolor{black}{$\bbu_{0:T-1}$}
solves Problem~\ref{pr:fixed-alpha}.

\end{prop}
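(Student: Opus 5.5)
The plan is to show that the event in the CP guarantee \eqref{eq:tube-guarantee} is contained in the task-success event of \eqref{eq:fixed-alpha-goal}. Monotonicity of probability then transfers the $1-\alpha$ lower bound from the first event to the second.

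Define the event
\[
E := \Bigl\{\sup_{\bbu'_{0:T'-1}\in\ccalU(\ccalM)} \max_{t\in\{0,\dots,T'\}} \|\bbx'_t-\hat{\bbx}'_t\|_2 \le \hat q_\alpha\Bigr\},
\]
where primes denote the trajectories generated by a generic $\bbu'_{0:T'-1}$. Since $\ccalM\sim\ccalD_{\ccalM}$ is exchangeable with the calibration problems $\ccalM_1,\dots,\ccalM_{N_{\mathrm{cal}}}$, the scores $R(\ccalM),R_1,\dots,R_{N_{\mathrm{cal}}}$ are exchangeable. The CP guarantee \eqref{eq:cp-validity-general}, specialized in \eqref{eq:tube-guarantee}, therefore gives $\mathbb{P}(E)\ge 1-\alpha$.

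Next, fix the given $\bbu_{0:T-1}$. Because $\bbu_{0:T-1}\in\ccalU(\ccalM)$, it is one of the sequences over which the supremum in $E$ is taken. So on $E$ we have $\|\bbx_t-\hat{\bbx}_t\|_2\le\hat q_\alpha$ for every $t\in\{0,\dots,T\}$, i.e.\ $\bbx_t\in\mathcal B(\hat{\bbx}_t,\hat q_\alpha)$. Combining this with hypothesis \eqref{eq:alpha-certified} gives $\bbx_t\in\ccalX_{\mathrm{free}}$ for all $t$ and $\bbx_T\in\mathcal B(\hat{\bbx}_T,\hat q_\alpha)\subseteq\ccalX_{\mathrm{goal}}$. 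Hence $E$ is contained in the success event, and $\mathbb{P}(\text{success})\ge\mathbb{P}(E)\ge1-\alpha$. The same $T\le H$ is inherited from membership in $\ccalU(\ccalM)$.

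The main subtlety is that $\bbu_{0:T-1}$ may be chosen after seeing $\ccalM$, for example by a planner solving the tightened problem. It is not a fixed object independent of the test problem, so a per-sequence CP statement would not suffice. I would handle this by stressing that the score \eqref{eq:ideal-score} takes a supremum over all of $\ccalU(\ccalM)$. The event $E$ is therefore uniform over sequences and covers any data-dependent selection, including one that depends on $\hat q_\alpha$, with no union bound.

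A second point to note is that the calibration scores must be exactly $R(\ccalM_i)$ as defined, not finite-subset approximations, for exchangeability to hold. Finally, if $\hat q_\alpha=\infty$, condition \eqref{eq:alpha-certified} cannot hold (assuming $\ccalX_{\mathrm{free}}$ is bounded or is a proper subset of $\mathbb{R}^n$), so the statement is vacuous in that case.
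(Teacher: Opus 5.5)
Your proposal is correct and follows the same route as the paper. You invoke \eqref{eq:tube-guarantee} to get, with probability at least $1-\alpha$, the bound $\|\bbx_t-\hat{\bbx}_t\|_2\le\hat q_\alpha$ simultaneously for every nominally feasible control sequence, and then use \eqref{eq:alpha-certified} to place each true state in $\ccalX_{\mathrm{free}}$ and the terminal state in $\ccalX_{\mathrm{goal}}$; your event-inclusion-plus-monotonicity framing, and your observation that the supremum in the score is what covers a plan chosen after seeing $\ccalM$, make explicit what the paper's short proof leaves implicit.
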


\begin{proof} By \eqref{eq:tube-guarantee},  with probability at least $1-\alpha$, we have that: $\|\bbx_t-\hat{\bbx}_t\|_2 \le \hat q_\alpha,
\qquad
\forall t \in \{0,\dots,T\},$
for any nominally feasible control sequence $\bbu_{0:T-1}\in\ccalU(\ccalM)$. 
Equivalently, $\bbx_t \in \mathcal B(\hat{\bbx}_t,\hat q_\alpha)$ for all $t\in\{0,\dots,T\}$ with probability at least $1-\alpha$. If \eqref{eq:alpha-certified} holds, then these balls are contained in $\ccalX_{\mathrm{free}}$ for all $t\in\{0,\dots,T\}$, and the terminal ball is contained in $\ccalX_{\mathrm{goal}}$. Thus, the true trajectory remains in $\ccalX_{\mathrm{free}}$ at every time and ends in $\ccalX_{\mathrm{goal}}$ with probability 
\textcolor{black}{at least $1-\alpha$, i.e., \eqref{eq:fixed-alpha-goal} holds.}
\end{proof}

\begin{rem}[Practical Approximation of the NCS]\label{rem:practical-score}
Computing the NCS in \eqref{eq:ideal-score} exactly requires optimizing over all nominally feasible control sequences in $\ccalU(\ccalM)$, which is generally intractable. 
In line with related work \cite{sundarsingh2026safe, mei2026perceive},
we approximate it using a finite set of control sequences. 
\end{rem}

\begin{rem}[Extension to Stochastic Dynamics]\label{rem:stochastic-dynamics}
The above CP analysis can also be applied to stochastic true system dynamics of the form $\bbx_{t+1}=f(\bbx_t,\bbu_t,\bbw_t)$, where the disturbance $\bbw_t$ takes values in a bounded set $\ccalW$. In this case, the score in \eqref{eq:ideal-score} can be replaced by a worst-case score over both nominally feasible control sequences and admissible disturbance realizations, i.e., $R_{\mathrm{stoch}}(\ccalM)
    := \sup_{\substack{\bbw_{0:T-1}\in\ccalW^{T}
    }}
    R(\ccalM)$,
where $\ccalW^{T}$ denotes the set of length-$T$ disturbance sequences for the same horizon as $\bbu_{0:T-1}$ and $ R(\ccalM)$ is defined in \eqref{eq:ideal-score}.
Applying CP using $R_{\mathrm{stoch}}$ yields the same form of marginal guarantee as \eqref{eq:tube-guarantee}. This extension can be conservative, especially when $\ccalW$ is large. 
As discussed in Rem. \ref{rem:practical-score} the NCS can be approximated by considering a finite set of feasible control sequences evaluated on the true stochastic system. 
We include a stochastic case study in Section~\ref{sec:experiments}; reducing the conservativeness of this construction is left for future work.
\end{rem}

\begin{figure}[t]
    \centering
    \includegraphics[width=\linewidth]{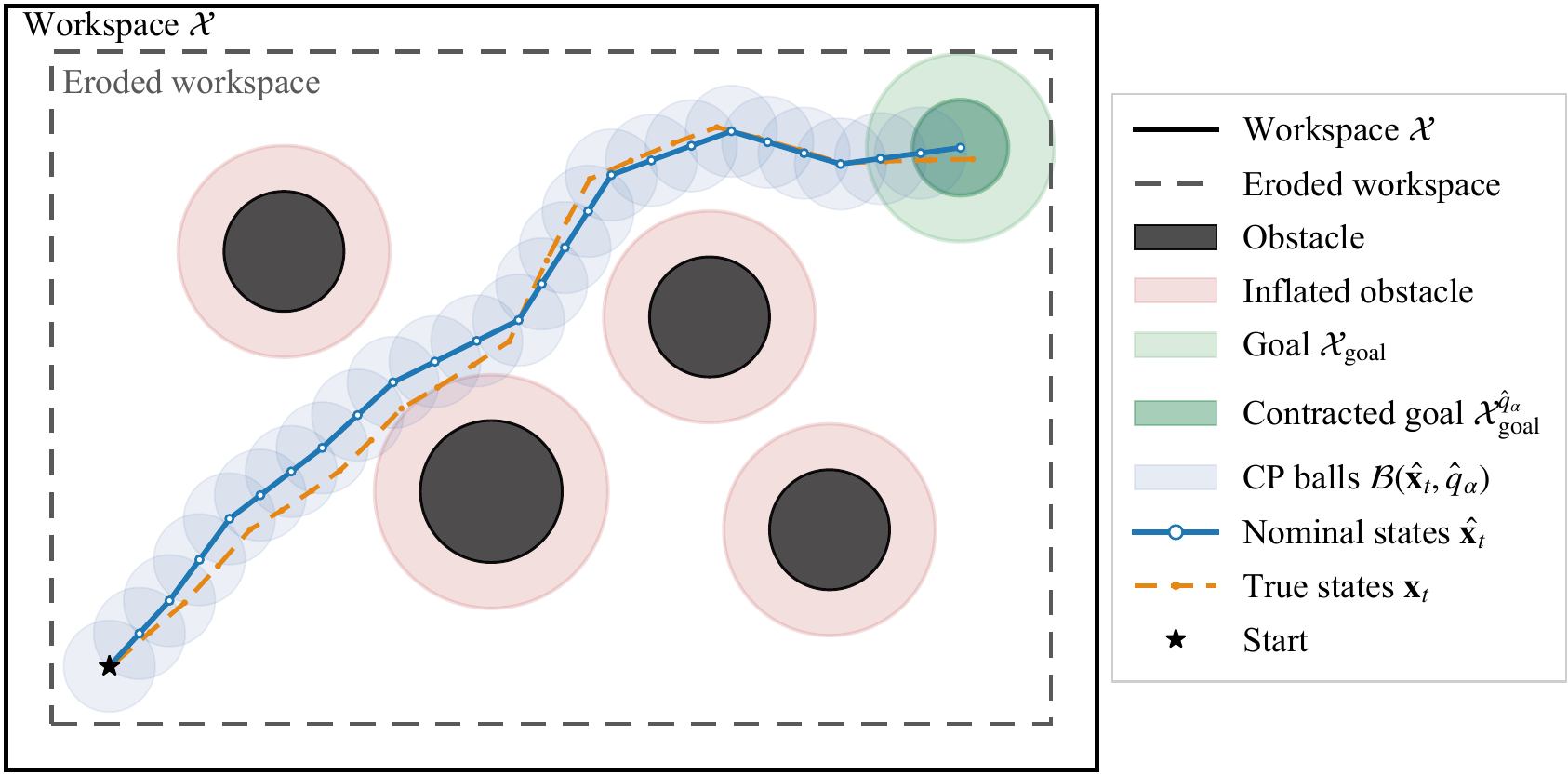}
    \caption{Illustration of the proposed CP-based planning framework. The conformal threshold $\hat q_\alpha$ induces a planning problem with tightened constraints such that any control sequence solving it inherits the reach-avoid guarantee of Prop.~\ref{prop:planner-certification}.}\vspace{-0.5cm}
    \label{fig:method}
\end{figure}

\subsection{Probabilistically Safe Planning}\label{sec:certified-planning}

Given the planning problem $\ccalM$ and the conformal threshold $\hat q_\alpha$, we define a planning problem with tightened constraints, $\ccalM_\alpha := (\ccalX,\ccalX_{\mathrm{free}}^{\hat q_\alpha},\bbx_{\mathrm{init}},\ccalX_{\mathrm{goal}}^{\hat q_\alpha},H)$, where
\begin{equation}\label{eq:tightened-sets}
\begin{aligned}
    \ccalX_{\mathrm{free}}^{\hat q_\alpha}
    &:= \{\bbx\in\ccalX:\mathcal B(\bbx,\hat q_\alpha)\subseteq\ccalX_{\mathrm{free}}\},\\
    \ccalX_{\mathrm{goal}}^{\hat q_\alpha}
    &:= \{\bbx\in\ccalX:\mathcal B(\bbx,\hat q_\alpha)\subseteq\ccalX_{\mathrm{goal}}\}.
\end{aligned}
\end{equation}
Intuitively, $\ccalX_{\mathrm{free}}^{\hat q_\alpha}$ and $\ccalX_{\mathrm{goal}}^{\hat q_\alpha}$ 
\textcolor{black}{are obtained by shrinking the free and goal sets by $\hat q_\alpha$. In the planar navigation example of Fig.~\ref{fig:method}, this is equivalent to inflating the obstacles, eroding the workspace boundary, and contracting the goal region.}
Solving $\ccalM_\alpha$ for the nominal model \eqref{eq:proxy-dyn} \textcolor{black}{thus} requires \textcolor{black}{the} nominal state to eventually reach $\ccalX_{\mathrm{goal}}^{\hat q_\alpha}$ while remaining in $\ccalX_{\mathrm{free}}^{\hat q_\alpha}$; this can be addressed by any suitable planner, including sampling-based motion planners such as RRT \cite{lavalle1998rapidly,lavalle2001randomized} and SST \cite{li2016asymptotically}.
Next, we show that solving $\ccalM_\alpha$ yields a solution to Problem \ref{pr:fixed-alpha}.
\begin{prop}[Planner Guarantee]\label{prop:planner-certification}
\textcolor{black}{Every} control sequence that solves the tightened problem $\ccalM_\alpha$ for the nominal model \eqref{eq:proxy-dyn} also solves the original problem $\ccalM$ for the true system \eqref{eq:true-dyn} with probability at least $1-\alpha$.

\end{prop}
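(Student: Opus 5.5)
The plan is to reduce Prop.~\ref{prop:planner-certification} to Prop.~\ref{prop:cp-certification} by checking that any solution of the tightened problem $\ccalM_\alpha$ meets both hypotheses of that earlier result. Let $\bbu_{0:T-1}$ with $T\le H$ solve $\ccalM_\alpha$ for the nominal model \eqref{eq:proxy-dyn}. Its nominal trajectory $\hat{\bbx}_{0:T}$, with $\hat{\bbx}_0=\bbx_{\mathrm{init}}$, then satisfies $\hat{\bbx}_t\in\ccalX_{\mathrm{free}}^{\hat q_\alpha}$ for all $t\in\{0,\dots,T\}$ and $\hat{\bbx}_T\in\ccalX_{\mathrm{goal}}^{\hat q_\alpha}$.

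\textbf{Step 1 (ball containment).} By the definitions in \eqref{eq:tightened-sets}, membership of $\hat{\bbx}_t$ in $\ccalX_{\mathrm{free}}^{\hat q_\alpha}$ means exactly $\mathcal B(\hat{\bbx}_t,\hat q_\alpha)\subseteq\ccalX_{\mathrm{free}}$ for every $t$. Likewise, $\hat{\bbx}_T\in\ccalX_{\mathrm{goal}}^{\hat q_\alpha}$ gives $\mathcal B(\hat{\bbx}_T,\hat q_\alpha)\subseteq\ccalX_{\mathrm{goal}}$. This is precisely condition \eqref{eq:alpha-certified}.

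\textbf{Step 2 (nominal feasibility for the original problem).} Prop.~\ref{prop:cp-certification} requires $\bbu_{0:T-1}\in\ccalU(\ccalM)$, the set of sequences that are nominally feasible for the original, untightened problem. Since $\hat q_\alpha\ge 0$, the center of each ball lies in the ball: $\hat{\bbx}_t\in\mathcal B(\hat{\bbx}_t,\hat q_\alpha)\subseteq\ccalX_{\mathrm{free}}$, and similarly $\hat{\bbx}_T\in\ccalX_{\mathrm{goal}}$. Hence $\ccalX_{\mathrm{free}}^{\hat q_\alpha}\subseteq\ccalX_{\mathrm{free}}$ and $\ccalX_{\mathrm{goal}}^{\hat q_\alpha}\subseteq\ccalX_{\mathrm{goal}}$. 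The initial state and horizon are unchanged, so $\bbu_{0:T-1}\in\ccalU(\ccalM)$.

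\textbf{Step 3 (conclusion).} With Steps 1 and 2, Prop.~\ref{prop:cp-certification} applies directly, so $\bbu_{0:T-1}$ satisfies \eqref{eq:fixed-alpha-goal} on the true system.

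The main subtlety is not algebraic but probabilistic. The control sequence is chosen by the planner after $\ccalM$ and $\hat q_\alpha$ are revealed, so it is a random, data-dependent object. The argument is valid because \eqref{eq:tube-guarantee} bounds the supremum over \emph{all} of $\ccalU(\ccalM)$ within a single event of probability at least $1-\alpha$. Step 2 is what places the planner's output inside this set, so the event covers it regardless of which sequence the planner selects. I will state this uniformity explicitly to justify applying the guarantee to a sequence chosen using the threshold. The case $\hat q_\alpha=\infty$ needs no separate treatment: there the tightened sets are empty, no sequence solves $\ccalM_\alpha$, and the claim holds vacuously.
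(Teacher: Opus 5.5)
Your proposal is correct and follows the paper's proof. The definition of the tightened sets gives \eqref{eq:alpha-certified}, each nominal state lying in its own ball places $\bbu_{0:T-1}$ in $\ccalU(\ccalM)$, and Prop.~\ref{prop:cp-certification} finishes the argument; your explicit remark that \eqref{eq:tube-guarantee} holds uniformly over $\ccalU(\ccalM)$, and so covers a plan chosen after $\hat q_\alpha$ is revealed, is a clarification the paper leaves implicit.
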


\begin{proof}
Let $\bbu_{0:T-1}$ be a control sequence returned by a planner that solves $\ccalM_\alpha$ for the nominal model, and let $\hat{\bbx}_{0:T}$ be the corresponding nominal trajectory. 
\textcolor{black}{By construction of $\ccalM_\alpha$, we have $\hat{\bbx}_t\in\ccalX_{\mathrm{free}}^{\hat q_\alpha}$ for all $t\in\{0,\ldots,T\}$ and $\hat{\bbx}_T\in\ccalX_{\mathrm{goal}}^{\hat q_\alpha}$. By \eqref{eq:tightened-sets}, this gives $\mathcal B(\hat{\bbx}_t,\hat q_\alpha)\subseteq\ccalX_{\mathrm{free}}$ for all $t$ and $\mathcal B(\hat{\bbx}_T,\hat q_\alpha)\subseteq\ccalX_{\mathrm{goal}}$; in particular $\hat{\bbx}_t\in\ccalX_{\mathrm{free}}$ and $\hat{\bbx}_T\in\ccalX_{\mathrm{goal}}$, so $\bbu_{0:T-1}\in\ccalU(\ccalM)$.} 
Hence, the returned nominal trajectory satisfies the ball-containment conditions in \eqref{eq:alpha-certified}, and the result follows from Prop.~\ref{prop:cp-certification}.
\end{proof}

\section{Experimental Validation}
\label{sec:experiments}
This section evaluates the proposed CP-based planning framework introduced in Section~\ref{sec:cp-cert}. Section~\ref{sec:exp-setup} describes the \textcolor{black}{experimental setup.}
\textcolor{black}{Our evaluation considers both deterministic and stochastic true-system dynamics. Section~\ref{sec:exp-results} validates the guarantee in Proposition \ref{prop:planner-certification}, and characterizes the trade-off between conservativeness and task completion as the prescribed probability increases.}

\subsection{Experimental Setup}
\label{sec:exp-setup}

\textbf{System Dynamics:}
\textcolor{black}{We evaluate our approach on two scenarios representing common forms of model mismatch in practice: known analytical models with incorrect parameters, and learned models for systems whose dynamics are difficult to model analytically \cite{altawaitan2026adapting}.}

(i) First, we consider a deterministic Dubins vehicle. The nominal state is $\hat{\bbx}_t=[\hat x_t,\hat y_t,\hat\theta_t]^\top\in\mathbb{R}^3$ while the true state is $\bbx_t=[x_t,y_t,\theta_t]^\top\in\mathbb{R}^3$. The control input $u_t\in\mathbb{R}$ denotes turn rate. The nominal dynamics are $\hat{\bbx}_{t+1}=\hat{\bbx}_t+\Delta t[\cos(\hat\theta_t),\sin(\hat\theta_t),u_t]^\top$, whereas the true dynamics are $\bbx_{t+1}=\bbx_t+\Delta t[(1-\beta_v)\cos(\theta_t),(1-\beta_v)\sin(\theta_t),u_t]^\top$, where $\Delta t$ is a discretization period. Compared to the nominal model, the true system experiences a translational slowdown, 
\textcolor{black}{as would arise from surface or drivetrain resistance unaccounted for in the nominal model}. The parameter $\beta_v\in \{0.05,0.10,0.15,0.20\}$ controls the magnitude of the model mismatch, yielding four distinct true system models \textcolor{black}{that let us study how the planning behavior varies with mismatch}.

(ii) Our second case study considers a planar quadrotor\textcolor{black}{, a higher-dimensional underactuated system,} with stochastic dynamics. The true state is $\bbx_t=[x_t,y_t,\theta_t,v_{x,t},v_{y,t},\omega_t]^\top\in\mathbb{R}^6$, where $(x_t,y_t)$ denotes the position, $\theta_t$ the pitch angle, $(v_{x,t},v_{y,t})$ the translational velocities, and $\omega_t$ the angular velocity. The control input is $\bbu_t=[\delta T_t,\tau_t]^\top\in\mathbb{R}^2$, where $\delta T_t$ denotes the thrust deviation from hover and $\tau_t$ the body torque. The true dynamics are:
\begin{align}
    x_{t+1} &= x_t+\Delta t\,v_{x,t}, \qquad
    y_{t+1} = y_t+\Delta t\,v_{y,t}, \nonumber\\
    \theta_{t+1} &= \theta_t+\Delta t\,\omega_t, \nonumber\\
    v_{x,t+1} &= v_{x,t}
    +\Delta t\left[-\frac{1}{m}(mg+\delta T_t)\sin\theta_t+w_{x,t}\right], \nonumber\\
    v_{y,t+1} &= v_{y,t}
    +\Delta t\left[\frac{1}{m}(mg+\delta T_t)\cos\theta_t-g+w_{y,t}\right], \nonumber\\
    \omega_{t+1} &= \omega_t+\Delta t\left[\frac{\tau_t}{I}+w_{\omega,t}\right],
    \label{eq:quadrotor-dyn}
\end{align}
where $m=1$, $g=9.8$, and $I=0.25$. The process disturbance is $\bbw_t=[w_{x,t},w_{y,t},w_{\omega,t}]^\top$, where each component is sampled independently from $\mathcal{N}(0,1)$ and clipped to the interval $[-3,3]$. 
The nominal model used by the planner is a learned one-step neural network predictor trained on trajectory data from the noiseless version of \eqref{eq:quadrotor-dyn}. Given the current nominal state $\hat{\bbx}_t$ and control input $\bbu_t$, the network predicts the nominal state increment $\Delta\hat{\bbx}_t=h_\phi(\hat{\bbx}_t,\bbu_t)$, where $h_{\phi}$ is the trained network with parameters $\phi$, and the nominal state is propagated as $\hat{\bbx}_{t+1}=\hat{\bbx}_t+\Delta\hat{\bbx}_t$. \textcolor{black}{Unlike (i), the mismatch in this case is unstructured and arises jointly from approximation errors in the learned dynamics model and stochastic execution noise.}

\textbf{Distribution of Planning Problems:}
In both case studies, planning problems are sampled from the same distribution $\ccalD_{\ccalM}$. \textcolor{black}{
Each planning problem specifies obstacle and goal regions in the robot's 2D workspace. %
These workspace constraints induce the corresponding sets $\ccalX_{\mathrm{free}}$ and $\ccalX_{\mathrm{goal}}$ in the full state space.} Each problem specifies a $25\times25$ environment containing five randomly placed circular obstacles, whose radii are sampled independently from $[1,3]$. The initial state is sampled uniformly from the free space, and the goal region is a circle of radius $5$ centered at a location sampled uniformly from the free space, subject to a minimum distance of $15$ units from the initial position.

\textbf{CP Application:}
\textcolor{black}{A separate calibration dataset containing $N_{\mathrm{cal}}=100$ planning problems is constructed for each true system. Consequently, four calibration datasets are generated for the Dubins car (one for each value of $\beta_v$) and one dataset for the quadrotor. For each calibration planning problem, the NCS is approximated as described in Rem.~\ref{rem:practical-score} and \ref{rem:stochastic-dynamics} using $10$ nominally feasible control sequences generated by a kinodynamic RRT planner \cite{lavalle2001randomized}. 
}

\textbf{Planner Implementation:}
{\textcolor{black}{In all experiments, we solve  $\ccalM_{\alpha}$ using kinodynamic RRT that is run for a fixed number of iterations $N$: we use \textcolor{black}{$N=2,000$ for the Dubins model} and $N=75,000$ for the planar quadrotor.} }

\textbf{Evaluation metrics:}
\textcolor{black}{%
Given a target coverage level $1-\alpha$, we evaluate the proposed framework on $100$ independently sampled planning problems drawn from $\ccalD_{\ccalM}$. We report three metrics.
(1) \emph{Empirical coverage} is computed over the plans returned by the planner. A returned plan is counted as covered if the executed true trajectory remains within distance $\hat q_\alpha$ of the corresponding nominal trajectory at every time step. We then report the fraction of returned plans that satisfy this condition. According to the CP guarantee in \eqref{eq:tube-guarantee}, the empirical coverage is expected to be at least equal to $1-\alpha$.
(2) \emph{Planner success rate} is defined as the fraction of planning problems for which the kinodynamic RRT planner finds a solution to the tightened planning problem $\ccalM_\alpha$ within a fixed iteration budget $N$. If no solution is found, no control sequence is produced, even though the original planning problem $\ccalM$ may still be feasible. As the prescribed probability $1-\alpha$ increases, the radius $\hat q_\alpha$ becomes larger, resulting in a smaller free space $\ccalX_{\mathrm{free}}^{\hat q_\alpha}$ and goal region $\ccalX_{\mathrm{goal}}^{\hat q_\alpha}$ making $\ccalM_\alpha$ harder to solve. \textcolor{black}{Thus, this metric aims to characterize the conservativeness of the proposed approach.}
(3) \emph{Task-completion rate} is evaluated over the subset of planning problems for which the planner returns a solution. It is defined as the fraction of returned plans whose executed true trajectories remain collision-free and reach the goal region, i.e., satisfy \eqref{eq:fixed-alpha-goal}. 
\textcolor{black}{Due to Proposition~\ref{prop:planner-certification}, the task completion rate is expected to be at least $1-\alpha$}. 
}

\textbf{Nominal Planning Baseline:} To demonstrate the benefits of accounting for nominal-model imperfections, we compare the proposed framework \textcolor{black}{against RRT applied directly to the nominal model.}
\textcolor{black}{Specifically, the baseline} uses the same nominal model and iteration budget as our approach, but plans 
\textcolor{black}{over the original planning problem, $\ccalM$.}
To ensure a fair comparison and isolate the effect of conformal tightening, the baseline is evaluated on the same subset of planning problems for which our method returns a solution. %

\begin{figure}
    \centering
    \includegraphics[width=1.0\linewidth]{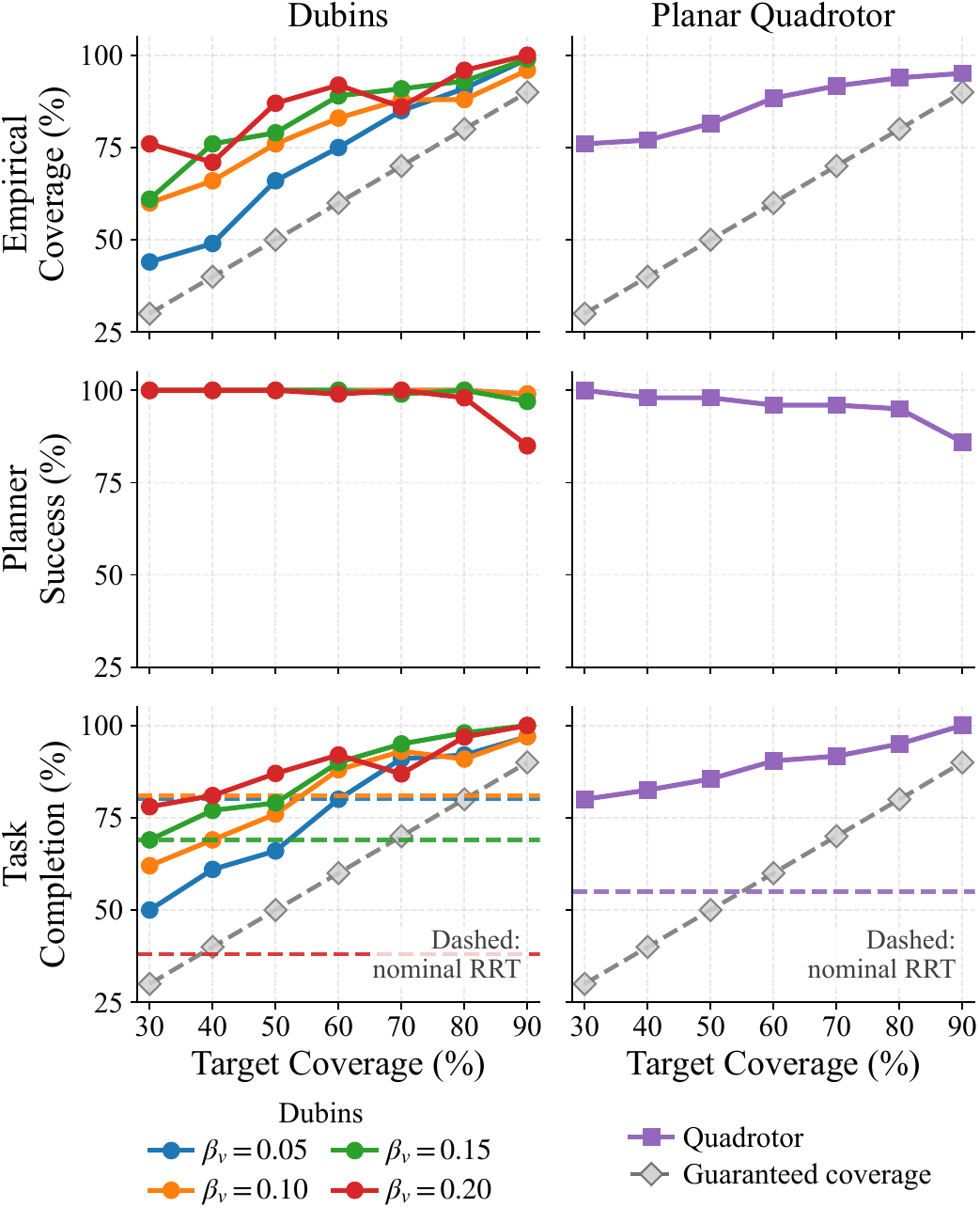}\vspace{-0.2cm}
    \caption{Empirical results for the Dubins vehicle (left column) and planar quadrotor (right column). Rows report empirical coverage, planner success rate, and task-completion rate. Solid curves correspond to RRT applied to the tightened problem $\ccalM_\alpha$. In the Dubins column, colors indicate the model-mismatch level $\beta_v$. The gray dashed diagonals denote the \textcolor{black}{prescribed probability $1-\alpha$}. In the bottom row, dashed horizontal lines denote the nominal RRT baseline in the corresponding color.}\vspace{-0.5cm}
    \label{fig:results}
\end{figure}

\subsection{Empirical Results}
\label{sec:exp-results}

\textcolor{black}{To characterize the effect of the prescribed coverage level on planning performance, we evaluate the proposed framework for $1-\alpha\in\{0.30,0.40,\ldots,0.90\}$. 
}

\noindent\textbf{Dubins vehicle:}
\textcolor{black}{The left column of Figure~\ref{fig:results} summarizes the results for the Dubins vehicle case study. The top-left panel compares the \textit{empirical coverage} against the prescribed level $1-\alpha$. In all cases, the empirical coverage exceeds the target level, as expected due to \eqref{eq:tube-guarantee}. As the model mismatch increases (larger $\beta_v$), the gap between the empirical and prescribed coverage also increases. This behavior is expected because larger mismatch produces larger nominal-to-true trajectory deviations, resulting in larger calibration scores and, consequently, a larger conformal threshold $\hat q_\alpha$. As a result, the conformal balls become more conservative, leading to empirical coverage that exceeds the prescribed level.}

\textcolor{black}{This conservativeness has a direct effect on \textit{planner success rate}. Specifically, increasing either $1-\alpha$ or the model mismatch enlarges $\hat q_\alpha$, which shrinks the tightened free and goal sets. Thus, fewer nominal trajectories satisfy the conditions in Proposition~\ref{prop:cp-certification}, reducing the planner success rate. The middle-left panel shows this trade-off. However, the degradation remains moderate over the sampled  problems. %
}

\textcolor{black}{The \textcolor{black}{bottom-left panel} reports the \emph{task-completion rate}. As predicted by Proposition~\ref{prop:planner-certification}, the observed task-completion rate consistently exceeds $1-\alpha$. %
Moreover, the proposed framework consistently outperforms the nominal-planning baseline, with the performance gap increasing as both the model mismatch and $1-\alpha$ increase. Overall, these results highlight the main trade-off of the proposed approach: increasing the prescribed coverage level $1-\alpha$ yields more conservative planning, reducing planner success rate while substantially improving the reliability of the returned plans.}

\noindent\textbf{Planar quadrotor:} \textcolor{black}{The right column of Fig.~\ref{fig:results} summarizes the results for the stochastic quadrotor case. Similar trends are observed. The empirical coverage consistently exceeds the prescribed level $1-\alpha$, validating the CP guarantee despite the combination of learned-model error and stochastic execution noise. As the prescribed coverage level increases, the planner success rate decreases because the larger conformal threshold $\hat q_\alpha$ yields a more conservative tightening of the free and goal sets. Nevertheless, the success rate remains high across the tested range. At the same time, the task-completion rate increases with the prescribed coverage level and consistently outperforms the nominal-planning baseline. Specifically, the task-completion rate of our method rises from about $80\%$ at $1-\alpha=30\%$ to $100\%$ at $1-\alpha=90\%$, while the nominal RRT baseline achieves about $55\%$.
}

\section{Conclusion}
\label{sec:conclusion} 

\textcolor{black}{This paper addressed motion planning for systems with unknown dynamics using an approximate nominal model. 
Using CP, we derived a probabilistic bound on the nominal-to-true trajectory deviation. We used this bound to construct tightened planning problems such that any control sequence solving them under the nominal model also solves the original planning problem on the true system with the prescribed probability. Comparative experiments on unseen planning problems validated the theoretical guarantees. Future work will focus on reducing conservativeness and developing adaptive-risk planning strategies when no plan can be computed with the desired probability.
}

\bibliographystyle{IEEEtran}
\bibliography{references}

\end{document}